\documentclass[]{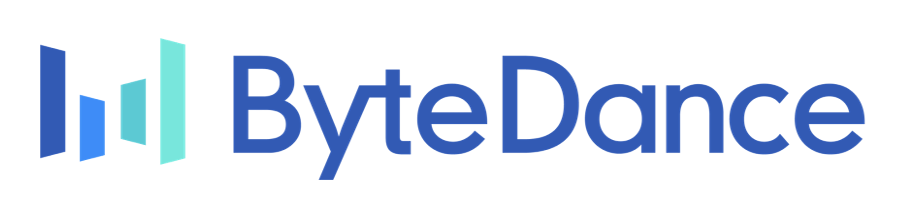}
\usepackage[toc,page,header]{appendix}


\usepackage{minitoc}
\usepackage{amsfonts}
\usepackage{amssymb}
\usepackage{tabularx}
\usepackage{listings}
\usepackage{xcolor}
\usepackage{cancel}

\newtheorem{proposition}{Proposition}

\newtheorem{proof}{Proof}

\usepackage{tabulary,multirow,xspace}
\usepackage{fixmath,mathtools,nicefrac,mmstyle}
\usepackage{subcaption}
\captionsetup{compatibility=false}
\usepackage{caption}
\usepackage{wrapfig} 
\usepackage[misc]{ifsym} 
\usepackage{colortbl}
\usepackage{multicol}
\usepackage[most]{tcolorbox}
\usepackage{pifont}
\usepackage{booktabs}
\usepackage{graphicx}
\usepackage{hyperref}
\usepackage{url}
\usepackage{multirow}
\usepackage[normalem]{ulem}
\useunder{\uline}{\ul}{}

\definecolor{codegreen}{rgb}{0,0.6,0}
\definecolor{codegray}{rgb}{0.5,0.5,0.5}
\definecolor{codepurple}{rgb}{0.58,0,0.82}
\definecolor{backcolour}{rgb}{0.95,0.95,0.92}
\definecolor{boxblue}{RGB}{57,89,163}
\definecolor{boxbluebg}{RGB}{230,237,250} 

\lstdefinestyle{mystyle}{
    backgroundcolor=\color{backcolour},   
    commentstyle=\color{codegreen},
    keywordstyle=\color{magenta},
    numberstyle=\tiny\color{codegray},
    stringstyle=\color{codepurple},
    basicstyle=\ttfamily\footnotesize,
    breakatwhitespace=false,         
    breaklines=true,                 
    captionpos=b,                    
    keepspaces=true,                 
    numbers=none,                    
    numbersep=5pt,                  
    showspaces=false,                
    showstringspaces=false,
    showtabs=false,                  
    tabsize=2
}
\lstset{style=mystyle}

\definecolor{mygray1}{gray}{.95}
\definecolor{mygray2}{gray}{.9}
\definecolor{mygray3}{gray}{.95}
\usepackage{pifont}

\newlength\savewidth
\newcolumntype{x}[1]{>{\centering\arraybackslash}p{#1pt}}

\newcommand{\app}{\raise.17ex\hbox{$\scriptstyle\sim$}}

\makeatletter
\DeclareRobustCommand\onedot{\futurelet\@let@token\@onedot}
\def\@onedot{\ifx\@let@token.\else.\null\fi\xspace}

\makeatother

\makeatletter

\newcommand{\Rmnum}[1]{\expandafter\@slowromancap\romannumeral #1@}
\makeatother

\usepackage{xcolor}
\usepackage{graphicx}
\usepackage{amssymb}
\usepackage{pifont}
\usepackage{floatrow}
\usepackage{amsmath} 
\usepackage{float}
\usepackage{wrapfig}
\usepackage{multirow}
\usepackage{tcolorbox}
\tcbuselibrary{breakable, skins, raster}
\usepackage{listings}
\lstset{breaklines=true,
        basicstyle=\ttfamily,
        columns=flexible}

\usepackage{listings}

\definecolor{commentgreen}{rgb}{0.1, 0.4, 0.1}
\definecolor{keywordblue}{rgb}{0.1, 0.1, 0.7}
\definecolor{stringred}{rgb}{0.7, 0.1, 0.1}

\lstdefinestyle{mystyle}{
    commentstyle=\color{commentgreen},
    keywordstyle=\color{keywordblue},   
    stringstyle=\color{stringred},
    basicstyle=\ttfamily\scriptsize, 
    breaklines=true,
    keepspaces=true,
    showstringspaces=false,
    frame=none,                     
    language=Python, 
}

\usepackage{xspace}
\usepackage{makecell}
\usepackage{booktabs}
\usepackage{amssymb} 
\usepackage{pifont} 
\usepackage{dblfloatfix}
\usepackage{enumitem}
\usepackage{multirow}
\usepackage{rotating}
\usepackage{array}
\usepackage[autostyle=true]{csquotes}

\usepackage{colortbl}
\usepackage{xcolor}
\definecolor{Red}{RGB}{192, 0, 0}
\definecolor{Blue}{RGB}{12, 114, 186}
\definecolor{Yellow}{RGB}{218, 169, 20}
\definecolor{lightyellow}{RGB}{255,255,153}

\definecolor{HighlightBlue}{RGB}{0, 100, 148}
\definecolor{HighlightRed}{RGB}{230, 57, 70}

\definecolor{LightRed}{HTML}{ffe0e0}
\definecolor{LightBlue}{HTML}{def5ff}
\definecolor{LightYellow}{HTML}{FFF6DB}
\definecolor{LightGreen}{HTML}{eff9f0}

\usepackage{hyperref}
\usepackage{url}
\usepackage{natbib}
\usepackage{pifont}
\usepackage{color, xcolor}

\usepackage{wrapfig,lipsum,booktabs}

\usepackage{colortbl}
\definecolor{lightyellow}{RGB}{255,242,204}
\definecolor{lightorange}{RGB}{251,229,214}
\definecolor{lightgreen}{RGB}{226,240,217}
\definecolor{lightblue}{RGB}{222,235,247}
\definecolor{lightgray}{RGB}{209,201,206}
\definecolor{deepgray}{RGB}{178,164,173}
\definecolor{deepblue}{RGB}{112,168,218}

\usepackage{threeparttable}
\usepackage{xspace}
\usepackage{graphicx} 
\usepackage{float} 
\usepackage{booktabs}
\usepackage{multicol}
\usepackage{multirow} 
\usepackage[ruled,boxed,linesnumbered]{algorithm2e}
\usepackage{caption}
\usepackage{cleveref}
\crefname{figure}{Fig.}{Figs.}  
\Crefname{figure}{Fig.}{Figs.}  
\crefname{table}{Tab.}{Tabs.}  
\Crefname{table}{Tab.}{Tabs.}   
\crefname{section}{Sec.}{Secs.} 
\Crefname{section}{Sec.}{Secs.}   
\crefname{appendix}{Appendix}{Appendix}   
\Crefname{appendix}{Appendix}{Appendix}

\usepackage{setspace}

\usepackage{tikz}

\newcommand{\secondbest}[1]{\underline{#1}}
\newcommand{\safeincludegraphics}[2][]{%
  \IfFileExists{#2}{%
    \includegraphics[#1]{#2}%
  }{%
    \PackageWarning{instructmole}{Missing figure #2; rendering a placeholder}%
    \fbox{%
      \begin{minipage}[c][0.25\textheight][c]{0.95\linewidth}
        \centering Missing figure: \texttt{\detokenize{#2}}
      \end{minipage}%
    }%
  }%
}

\usepackage{booktabs,threeparttable,tabularx,array,ragged2e,xcolor,siunitx}
\sisetup{group-separator = {,}}
\newcolumntype{L}[1]{>{\raggedright\arraybackslash}p{#1}}


\title{InstructMoLE: Instruction-Guided Mixture of Low-rank Experts for Multi-Conditional Image Generation}

\author{
\centerline{
    Jinqi Xiao $^{2,{\ddagger}}$\quad 
    Qing Yan $^{1,{\dagger}}$ \quad  
    Liming Jiang $^{1}$ \quad 
    Zichuan Liu $^{1}$ \quad
    \vspace{5pt}
} 
\centerline{
    Hao Kang $^{1}$ \quad
    Shen Sang $^{1}$ \quad
    Tiancheng Zhi $^{1}$ \quad
    Jing Liu $^{1}$ \quad
    \vspace{5pt}
}
\centerline{
    Cheng Yang $^{2}$ \quad
    Xin Lu $^{1}$ \quad
    Bo Yuan $^{2}$ \quad
    \vspace{-5pt}
}
}

\affiliation[1]{ByteDance Inc.}
\affiliation[2]{Rutgers University}
\contribution[\dagger]{Project lead}
\contribution[\ddagger]{Corresponding Authors}

\abstract{
    Parameter-Efficient Fine-Tuning of Diffusion Transformers (DiTs) for diverse, multi-conditional tasks often suffers from task interference when using monolithic adapters like LoRA. The Mixture of Low-rank Experts (MoLE) architecture offers a modular solution, but its potential is usually limited by routing policies that operate at a token level. Such local routing can conflict with the global nature of user instructions, leading to artifacts like spatial fragmentation and semantic drift in complex image generation tasks.
    To address these limitations, we introduce InstructMoLE, a novel framework that employs an Instruction-Guided Mixture of Low-Rank Experts. Instead of per-token routing, InstructMoLE utilizes a global routing signal, Instruction-Guided Routing (IGR), derived from the user's comprehensive instruction. This ensures that a single, coherently chosen expert council is applied uniformly across all input tokens, preserving the global semantics and structural integrity of the generation process. To complement this, we introduce an output-space orthogonality loss, which promotes expert functional diversity and mitigates representational collapse. Extensive experiments demonstrate that InstructMoLE significantly outperforms existing LoRA adapters and MoLE variants across challenging multi-conditional generation benchmarks. Our work presents a robust and generalizable framework for instruction-driven fine-tuning of generative models, enabling superior compositional control and fidelity to user intent.    
}

\date{\today}

\checkdata[GitHub]{\url{https://github.com/yanq095/InstructMoLE}}


\begin{document}
\maketitle

\section{Introduction}
The advent of powerful, open-source Diffusion Transformers (DiTs)~\citep{peebles2023scalable, esser2024scaling, flux2024} has unlocked unprecedented capabilities in generative AI, fueling a demand for highly specialized and compositional functionalities, from multi-subject composition to personalized content creation~\citep{labs2025flux, wu2025qwenimagetechnicalreport, wu2025omnigen2, liu2025step1x-edit}. Parameter-Efficient Fine-Tuning (PEFT), particularly Low-Rank Adaptation (LoRA)~\citep{hu2022lora}, has become the de facto standard for such customization~\citep{xiao2024coap}. However, LoRA's monolithic update structure conflicts with the demands of multi-task fine-tuning, leading to catastrophic forgetting as different task objectives interfere~\citep{biderman2024loralearnsforgets,han2024parameterefficientfinetuninglargemodels}. 


The Mixture-of-Experts (MoE) architecture offers a structured solution to this interference problem. As theoretically grounded by~\citep{li2025theory}, MoE mitigates catastrophic forgetting by diversifying its experts to specialize in different tasks, which in turn helps to establish, or at least preserve, task-specific expert circuits and balance the loads across them. In the domain of language and multi-modal models, MoE has been extensively studied, leading to established, often task-aware, routing strategies~\citep{wumixture,liu2025beyond,li2024mixlora,gou2023mixture,chenoctavius,wu2025routing,sun2025a,fei2024scalingdiffusiontransformers16}. 

In contrast, its application to diffusion transformers for multi-conditional image generation (e.g. image editing, multi-subject driven generation) remains comparatively underexplored. Initial works like ICEdit~\citep{zhang2025context} use LoRA-MoE adapters with input-dependent routing, but they do not specialize routing for globally instruction-consistent expert assignment across the generated image. This reliance on locally varying or input-state-dependent routing reveals a potential misalignment with the global nature of image generation instructions. A user's instruction for compositional generation, for instance, to create a scene with ``\textit{A baby crawling on the grass, a white horse grazing nearby, and a football helmet}'' (in Figure~\ref{fig:qual}), establishes a complex set of global semantic relationships. In contrast, token-level routing delegates expert selection to each local image patch independently. This uncoordinated decision-making process can lead to critical failures, often resulting in subjects that are incoherently rendered, spatially fragmented, or have their specified relationships and attributes ignored entirely.

To rectify this misalignment, we propose InstructMoLE, a MoLE framework for multi-conditional image generation built upon a novel, globally consistent routing policy. The core of our framework is Instruction-Guided Routing (IGR), which conditions expert selection entirely on the global semantics of the user's textual instruction. This mechanism ensures that for any given layer of the model, a single, unified ``expert council'' is chosen based on the instruction and broadcast to all spatial locations within that layer. This enforces processing consistency at each stage of generation, while critically allowing the model to recruit different sets of specialized experts across different layers, tailoring the computation to the varying levels of abstraction. However, the effectiveness of this globally-applied council hinges on the functional diversity of its constituent experts. To ensure this diversity and prevent representational collapse, we complement IGR with a novel output-space orthogonality loss. This regularizer encourages the learned experts to occupy distinct functional roles, thereby maximizing the compositional power of the selected council. Our main contributions are:
\begin{itemize}
    \item We identify a critical challenge in applying MoLE to instruction-based editing and generation: the inherent mismatch between local, token-level routing policies and the global, semantic scope of user instructions. To address this, we propose InstructMoLE, a framework centered on Instruction-Guided Routing (IGR), which aligns expert selection with the holistic intent of the instruction.

    \item We introduce a novel output-space orthogonality regularizer to explicitly promote functional diversity among experts. This technique complements standard load-balancing losses by directly penalizing representational redundancy in the expert outputs, thereby mitigating expert collapse and improving the model's compositional control.

    \item We provide empirical validation on the Flux.1 Kontext backbone across multiple challenging benchmarks for multi-conditional image generation. Our results demonstrate that InstructMoLE achieves strong performance, particularly in tasks demanding high compositional fidelity and adherence to complex spatial relationships. These findings establish that Instruction-Guided Routing (IGR) is more effective than traditional token-level routing policies in our controlled setting.
\end{itemize}

\begin{figure}[t] 
  \centering
  \safeincludegraphics[width=\textwidth]{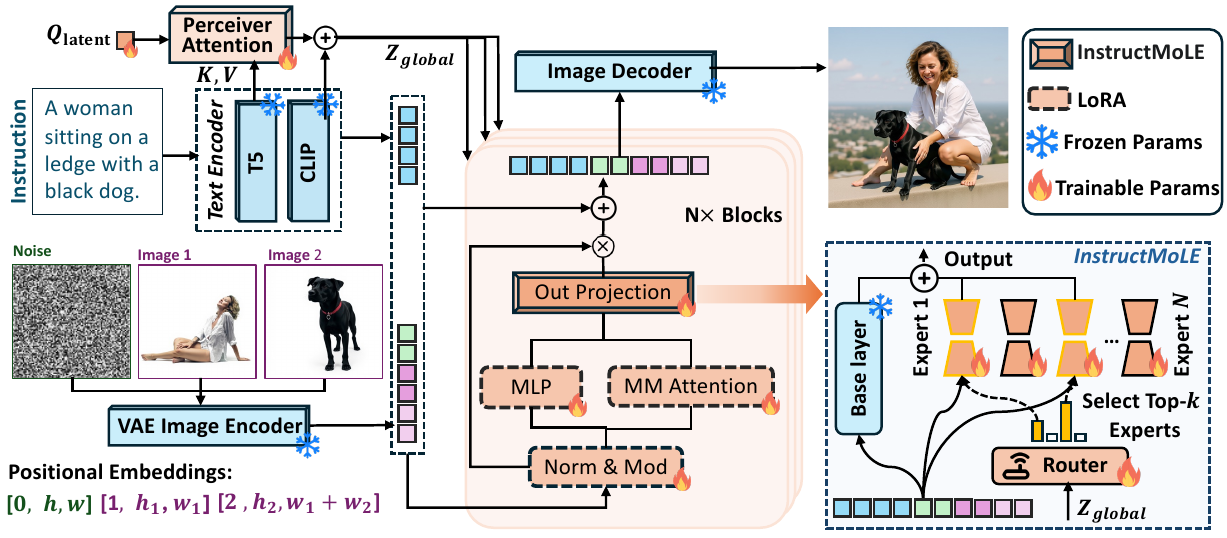}
  \caption{Illustration of the InstructMoLE framework. A global signal, ${\mathbf{Z}_\text{global}}$, is distilled from the user's instruction to guide a Router. The Router selects a single, consistent set of LoRA experts, which is then applied to all input tokens.}
  \label{fig:architecture}
\end{figure}

\section{Related Work}
\subsection{Conditional Generation with Diffusion Transformers}
The advent of Diffusion Transformers (DiTs) has marked a new era for generative modeling, demonstrating remarkable scalability and performance~\citep{peebles2023scalable, esser2024scaling}. A significant line of research has focused on enhancing their controllability for complex, instruction-driven tasks. Methods like DreamO~\citep{mou2025dreamo} and In-Context Edit~\citep{zhang2025context} have enabled sophisticated image customization and editing based on diverse user inputs. Others have introduced lightweight modules for flexible conditioning~\citep{zhang2025easycontrol, jiang2025infiniteyou} or tackled data bottlenecks in multi-subject scenarios~\citep{wu2025less}. Complementary work has explored compression techniques for attention-based vision models~\citep{xiao2023comcat}. While these works significantly advance the state of conditional generation, they do not explicitly align expert selection with a single global instruction representation. This leaves open the question of how to design modular adaptation that preserves global semantic consistency in multi-conditional generation.

\subsection{Mixture-of-Experts for Parameter-Efficient Fine-Tuning}
The Mixture-of-Experts (MoE) architecture, particularly in its low-rank form (MoLE), offers a powerful solution to the limitations of monolithic fine-tuning. By allocating specialized LoRA experts for different functions, MoLE can mitigate task interference and enhance model capacity efficiently. This paradigm has been extensively and successfully explored in language and multi-modal models for multi-task learning and instruction tuning~\citep{dou2024loramoe, gou2023mixture, chenoctavius}. Related efforts have also focused on compressing MoE models through inter-expert pruning and low-rank decomposition~\citep{yang2024moe}. Its application to vision, however, is more nascent. While initial works have validated the potential of MoLE for tasks like controllable visual effect generation~\citep{mao2025omni}, the full power of this architecture is unlocked only through a well-designed routing policy that determines which experts to activate for a given input.

\subsection{MoE Design: Routing Policies and Expert Diversity}
The design of an effective MoE system centers on two core challenges: the routing policy and the promotion of expert diversity. The predominant routing approach, inherited from language models, is token-level routing, where each token independently selects experts~\citep{shazeer2017outrageously, zhang2025context}. While advanced variants like Expert Choice routing improve dynamics~\citep{sunec}, they inherit the same fundamental limitation: local, per-token decision-making. This is misaligned with the holistic intent of user instructions in image generation. A second, orthogonal challenge is the common issue of ``expert collapse'', where experts become functionally redundant. Standard load-balancing only encourages uniform utilization, not functional diversity. Our work addresses these dual challenges head-on. We introduce Instruction-Guided Routing (IGR), a global policy that aligns expert selection with the user's instruction, and complement it with an output-space orthogonality loss that directly enforces functional diversity among experts.

\section{Methodology}

We present InstructMoLE, a Mixture of Low-rank Experts (MoLE) framework for multi-conditional image generation, illustrated in Figure~\ref{fig:architecture}. The framework champions a singular, top-down routing principle: a unified Instruction-Guided Routing (IGR) policy is applied consistently across all model layers. This is achieved through a Perceiver-style signal distillation for robust global guidance and an output-space orthogonality loss to maintain functional diversity among experts.

\subsection{InstructMoLE Architecture}
\label{sec:instructmole_arch}

Standard LoRA augments a frozen linear layer weight $\mathbf{W}_0 \in \mathbb{R}^{D_{\text{in}} \times D_{\text{out}}}$ with a single, static low-rank update~\citep{hu2022lora, xiao2023haloc}. We replace this static update with a dynamic, sparse mixture of $N$ expert LoRA modules $\{\mathbf{E}_i\}_{i=1}^N$. Each expert $i$ consists of a down-projection matrix $\mathbf{W}^i_{\text{A}} \in \mathbb{R}^{D_{\text{in}} \times r}$ and an up-projection matrix $\mathbf{W}^i_{\text{B}} \in \mathbb{R}^{r \times D_{\text{out}}}$, where $r \ll D_{\text{in}}, D_{\text{out}}$ is the rank. The function for expert $i$ is defined as $\mathbf{E}_i(\mathbf{X}) = \mathbf{X}\mathbf{W}^i_{\text{A}}\mathbf{W}^i_{\text{B}}$. The output of a MoLE layer is therefore:
\begin{equation}
    \text{MoLE}(\mathbf{X}) = \mathbf{X}\mathbf{W}_0 + \sum_{i=1}^{N} g_i(\cdot) \mathbf{E}_i(\mathbf{X}) 
    \label{eq:mole_general}
\end{equation}
where the gating weights $g_i(\cdot)$ are determined by a routing network; $\mathbf{X} \in \mathbb{R}^{B \times L \times D_{\text{in}}}$ is the input tensor with batch size $B$ and sequence length $L$. 

\paragraph{Token-level vs. Instance-level Routing.}
Conventional MoE models employ \emph{token-level} routing, where each of the $B \times L$ tokens independently selects experts from its local hidden state~\citep{fedus2022switch,shazeer2017outrageously,yuanexpert,sunec,zhang2025context}, resulting in a dense logits tensor of shape $\mathbb{R}^{B \times L \times N}$. The uncoordinated, per-token decisions create a tension with the inherently global nature of many instructions, leading to critical failures: (i) spatial fragmentation, yielding region-wise style and hue discontinuities; (ii) amplification of high-frequency noise through routing jitter; and (iii) weakened adherence to the user instruction.

To counteract these failure modes, we introduce an \emph{instance-level} policy that enforces a globally consistent expert assignment at each layer. Instead of fragmented per-token decisions, our router is guided by the global semantics of the user instruction to compute a \emph{unified expert council} for the entire instance. This council, represented by a compact logits tensor of shape $\mathbb{R}^{B \times 1 \times N}$, is then broadcast identically to all $L$ tokens within that layer. We term this mechanism Instruction-Guided Routing. By ensuring that a single, unified expert council is applied to all input tokens within a given layer, IGR directly prevents fragmentation and routing jitter, thereby preserving the semantic and structural integrity of the image.

\subsubsection{Instruction-Guided Routing (IGR) Policy}
\paragraph{Global Routing Signal ($\mathbf{Z}_{\text{global}}$).}
The IGR policy is conditioned on a global signal, $\mathbf{Z}_{\text{global}}$, which is distilled from the editing instruction, $\mathbf{I_c}$. To construct a signal that is both semantically robust and compositionally aware, we fuse two distinct text representations. Given an $\mathbf{I_c}$, a T5 encoder produces token-level features $\mathbf{H}_{\text{inst}} \in \mathbb{R}^{B \times L_{\text{inst}} \times D_{\text{inst}}}$, where $B$ is the batch size, $L_{\text{inst}}$ is the instruction length, and $D_{\text{inst}}$ is the T5 embedding dimension. Concurrently, a CLIP encoder provides a pooled embedding $\mathrm{CLIP}(\mathbf{I_c}) \in \mathbb{R}^{B \times D}$. While the pooled embedding offers a strong semantic anchor, it can overlook crucial compositional nuance. For example, in the instruction ``Change the woman's dress to red and the man's shirt to blue", a pooled embedding might average ``red" and ``blue", losing the critical association between the colors and their respective subjects. Conversely, token-level features, if naively averaged, risk diluting these key semantics.

We therefore employ a \textbf{Perceiver-style attentional bottleneck}~\citep{jaegle2021perceiver} to distill the rich, token-level information from $\mathbf{H}_{\text{inst}}$ into a compact summary before fusing it with the CLIP embedding. Concretely, we first project the T5 tokens into the $D$-dimensional CLIP space using a linear layer $\mathbf{W}_{\text{in}} \in \mathbb{R}^{D_{\text{inst}} \times D}$. A single, learnable latent query, $\mathbf{Q}_{\text{latent}} \in \mathbb{R}^{1 \times 1 \times D}$, is then used to iteratively query the projected instruction tokens via $S=2$ layers of Perceiver-style attention:
\begin{equation}
\label{eq:perceiver}
\begin{gathered}
\tilde{\mathbf{X}} \,=\, \mathbf{H}_{\text{inst}} \mathbf{W}_{\text{in}}, \quad
\mathbf{L}^{(0)} \,=\, \mathrm{tile}_B(\mathbf{Q}_{\text{latent}}) , \\
\mathbf{L}^{(s)} \,=\, \mathbf{L}^{(s-1)} + \mathrm{PerceiverAttn}(Q, K=\tilde{\mathbf{X}}, V=\tilde{\mathbf{X}})\,,
\end{gathered}
\end{equation}
where $\mathrm{tile}_B(\cdot)$ repeats the latent query $B$ times, $Q=\mathbf{L}^{(s-1)}$. The final latent, $\mathbf{L}^{(S)}$, serves as a distilled summary of the compositional details present in the T5 features. This summary is projected with $\mathbf{W}_{\text{out}} \in \mathbb{R}^{D \times D}$, normalized, and then additively fused with the CLIP embedding:
\begin{equation}
\label{eq:zglobal}
\mathbf{Z}_{\text{global}} \;=\; \underbrace{\text{LayerNorm}(\mathbf{L}^{(S)} \mathbf{W}_{\text{out}})}_{\text{distilled compositional summary}} \;+\; \underbrace{\mathrm{CLIP}(\mathbf{I_c})}_{\text{holistic semantics}}.
\end{equation}
Before fusion, we squeeze the singleton latent dimension so both terms have shape $\mathbb{R}^{B \times D}$. This design allows the distilled term to contribute fine-grained specificity from the instruction, while the CLIP embedding provides a robust, holistic semantic anchor, together forming a balanced and powerful global routing signal.

\paragraph{IGR Gating Mechanism.}
The IGR gating mechanism is instantiated independently at each MoLE layer. We formalize the computation by first describing the process for a single instance $b \in \{1, \dots, B\}$ within a batch, which corresponds to the input tensor slice $\mathbf{X}_b \in \mathbb{R}^{L \times D_{\text{in}}}$. The per-instance outputs, $\mathbf{Y}_b$, are subsequently stacked to form the full batch output $\mathbf{Y}$.

At a given $l$-th layer, the gating network $\mathcal{G}_{l}:\mathbb{R}^{D}\!\to\!\mathbb{R}^{N}$ generates a logit vector over the $N$ experts from a shared global instruction signal, $\mathbf{Z}_{\text{global}, b}$. The Top-$k$ experts are then selected:
\begin{equation}
\label{eq:igr_gate_compact_optimized_en}
\begin{aligned} 
(\mathcal{I}_{b}, \mathbf{w}_{b}) = \text{Top-}k(\mathrm{Softmax}(\mathcal{G}_{l}(\mathbf{Z}_{\text{global}, b}) ), k).
\end{aligned}
\end{equation}
Here, $\mathcal{I}_{b} \subset \{1, \dots, N\}$ is the set of indices for the Top-$k$ experts, and $\mathbf{w}_{b} \in \mathbb{R}^{k}$ contains the selected $\mathrm{Softmax}$ probabilities without re-normalization after Top-$k$ selection. $\mathcal{I}_{b}$ and $\mathbf{w}_{b}$ are then broadcast across all $L$ token positions in the sequence. The final output for the $l$-th layer, $\mathbf{Y}_b$, is computed by summing the output of a shared linear layer with the weighted sum of the selected expert outputs:
\begin{equation}
\label{eq:igr_forward_optimized_en}
\mathbf{Y}_{b}
=
\mathbf{X}_{b} \mathbf{W}_0
+
\sum_{j=1}^{k}
w_{b,j}
\cdot
\mathbf{E}_{\mathcal{I}_{b}[j]}\!\big(\mathbf{X}_{b}\big),
\end{equation}
where $w_{b,j}$ is the weight for the $j$-th selected expert, whose global index is $\mathcal{I}_{b}[j]$, and $\mathbf{E}_{\mathcal{I}_{b}[j]}$ denotes the corresponding low-rank expert.




\subsection{Theoretical Analysis}
\label{sec:theory}

We provide theoretical insights linking our design choices to established MoE principles.

\begin{proposition}[Expert Diversity]
\label{prop:diversity}
The orthogonality loss $\mathcal{L}_{\text{ortho}}$ encourages the expert outputs to form a linearly independent functional set within their span, satisfying the expert diversification requirement discussed above.
\end{proposition}

\begin{proof}
By minimizing $\mathcal{L}_{\text{ortho}}$, the expert output vectors $\{\mathbf{v}_i\}_{i=1}^N$ are encouraged toward mutual orthogonality. When these vectors are mutually orthogonal and nonzero, they form an orthogonal basis for their own $N$-dimensional span, ensuring each expert occupies a distinct functional role and reducing the risk of representational collapse.
\end{proof}

\begin{proposition}[Within-layer Spatial Consistency]
\label{prop:consistency}
At any fixed layer, IGR ensures structural spatial consistency: for any two tokens $x_i, x_j$ in the same instance, $R_l(x_i) = R_l(x_j)$, where $R_l(\cdot)$ denotes the routing decision at layer $l$.
\end{proposition}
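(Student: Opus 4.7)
The plan is to derive the claim directly from the construction of the IGR gate, showing that spatial consistency is in fact a definitional consequence of routing on an instance-level signal rather than a token-level one. Concretely, I would fix an instance index $b$ and two arbitrary token positions $i, j \in \{1, \dots, L\}$ corresponding to tokens $x_i = \mathbf{X}_{b,i,:}$ and $x_j = \mathbf{X}_{b,j,:}$, and then trace what the routing function $R(\cdot)$ actually depends on.

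First, I would unfold equation~\eqref{eq:igr_gate_compact_optimized_en}: the selected index set $\mathcal{I}_b$ and weight vector $\mathbf{w}_b$ are computed as $\text{Top-}k(\mathrm{Softmax}(\mathcal{G}_l(\mathbf{Z}_{\text{global}, b})), k)$, whose sole input is the instance-level global signal $\mathbf{Z}_{\text{global}, b}$. By equations~\eqref{eq:perceiver}--\eqref{eq:zglobal}, $\mathbf{Z}_{\text{global}, b}$ is distilled entirely from the instruction $\mathbf{I}_c$ via the Perceiver bottleneck and CLIP pooled embedding; it carries no dependence on the spatial hidden state $\mathbf{X}_{b,i,:}$ at any particular position $i$. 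Hence $\mathcal{I}_b$ and $\mathbf{w}_b$ are functions of $b$ alone.

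Next, I would appeal to the broadcast step described immediately after equation~\eqref{eq:igr_gate_compact_optimized_en}, which states that the pair $(\mathcal{I}_b, \mathbf{w}_b)$ is applied identically across all $L$ token positions within the instance. Formally, if we define $R(x_i) := (\mathcal{I}_b, \mathbf{w}_b)$ and $R(x_j) := (\mathcal{I}_b, \mathbf{w}_b)$ according to the forward rule in equation~\eqref{eq:igr_forward_optimized_en}, then both evaluations yield the same tuple, giving $R(x_i) = R(x_j)$.

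I do not expect any genuine obstacle: the proposition is essentially a tautological reformulation of the IGR design, and the proof reduces to pointing out that the argument of $\mathcal{G}_l$ is indexed only by the instance $b$. The one remark worth including for completeness is a scope comment, namely that the equality holds \emph{within} a single instance but not across instances, since $\mathbf{Z}_{\text{global}, b}$ legitimately varies with $b$; this clarifies that ``structural spatial consistency'' here means per-instance uniformity of the expert council, contrasting with the per-token fragmentation exhibited by conventional token-level routing.
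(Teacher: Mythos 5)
Your proposal is correct and follows essentially the same route as the paper's proof: both arguments observe that the routing decision is a function of the instance-level signal $\mathbf{Z}_{\text{global},b}$ alone and is broadcast identically to all $L$ token positions, making the equality $R(x_i)=R(x_j)$ a definitional consequence of the gate's construction. Your added remark that the consistency holds only within an instance (not across instances) is a sensible clarification but does not change the substance of the argument.
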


\begin{proof}
IGR computes routing decisions solely from the global signal $\mathbf{Z}_{\text{global}}$, independent of local token states. The routing logits $\mathcal{G}_{l}(\mathbf{Z}_{\text{global}, b})$ are broadcast identically to all $L$ tokens within layer $l$, ensuring $R_l(x_i) = R_l(x_j)$ for all $i, j \in \{1, \dots, L\}$.
\end{proof}

These properties address token-level routing failures: Proposition~\ref{prop:consistency} removes within-layer spatial fragmentation, while Proposition~\ref{prop:diversity} motivates expert diversity. Expert councils may vary across layers, but each layer changes them synchronously over the full feature map.

\subsection{Training Objectives}
\label{sec:loss}

Our training objective is designed to achieve two goals simultaneously: high-fidelity image generation and the effective, diverse utilization of the expert pool. This is realized through a composite loss function, where the $\lambda$ terms are hyper-parameters:
\begin{equation}
    \mathcal{L}_{\text{total}} = \mathcal{L}_{\text{flow}} + \lambda_{\text{aux}}\mathcal{L}_{\text{aux}} + \lambda_{\text{ortho}}\mathcal{L}_{\text{ortho}}.
\end{equation}

\paragraph{Generation Accuracy ($\mathcal{L}_{\text{flow}}$).}
The primary objective, $\mathcal{L}_{\text{flow}}$, is the standard flow-matching or diffusion loss of the MM-DiT backbone. This ensures the model accurately synthesizes images that are well aligned with the user's instruction.

\paragraph{Expert Load Balancing ($\mathcal{L}_{\text{aux}}$).}
Following the standard practice for training MoEs~\citep{shazeer2017outrageously}, we employ an auxiliary load-balancing loss, $\mathcal{L}_{\text{aux}}$, to prevent the model from consistently favoring only a few experts. This loss is a function of two quantities calculated over a batch: $f_i$, the fraction of tokens routed to expert $i$, and $p_i$, the mean routing probability for that expert:
\begin{equation}
\label{eq:aux}
\mathcal{L}_{\text{aux}} \;=\; N \sum_{i=1}^{N} f_i \cdot p_i \,,
\end{equation}
where $N$ is the total number of experts.
\paragraph{Output-space Orthogonality Loss for Functional Diversity ($\mathcal{L}_{\text{ortho}}$).}
A critical failure mode in Mixture-of-Experts (MoE) models is \textit{expert collapse}~\citep{fedus2022switch}, where distinct experts converge to functionally redundant solutions, thereby nullifying the benefits of the mixture. The standard mitigation, an auxiliary load-balancing loss, only encourages that experts are utilized with similar frequency but provides no explicit mechanism to ensure their functional diversity.

To address this limitation directly, we introduce an \textbf{orthogonality loss} that penalizes the functional similarity between pairs of experts. Our approach operates on the raw, pre-gating outputs. For a given input batch $\mathbf{X}$, we first compute the output of every expert, yielding a set of tensors $\{\mathbf{Y}_1, \dots, \mathbf{Y}_N\}$, where each $\mathbf{Y}_i = \mathbf{E}_i(\mathbf{X}) \in \mathbb{R}^{B \times L \times D_{\text{out}}}$. To measure the functional similarity between any two experts $i$ and $j$, we flatten their respective output tensors into high-dimensional vectors, $\mathbf{v}_i = \text{vec}(\mathbf{Y}_i)$ and $\mathbf{v}_j = \text{vec}(\mathbf{Y}_j)$.

The orthogonality loss, $\mathcal{L}_{\text{ortho}}$, is then defined as the mean of the squared cosine similarities over all unique pairs of these expert output vectors:
\begin{equation}
    \mathcal{L}_{\text{ortho}} = \frac{1}{N(N-1)} \sum_{i \neq j} \left( \frac{\mathbf{v}_i \cdot \mathbf{v}_j}{(\|\mathbf{v}_i\|_2 + \epsilon)(\|\mathbf{v}_j\|_2 + \epsilon)} \right)^2.
    \label{eq:ortho_loss}
\end{equation}
Minimizing this objective encourages the vectors $\{\mathbf{v}_i\}_{i=1}^N$ to become mutually orthogonal, forcing the expert functions $\{\mathbf{E}_i\}_{i=1}^N$ to learn complementary roles and reducing expert collapse. The small $\epsilon$ stabilizes the loss under standard LoRA initialization, where some expert outputs can be near zero at the start of training; our implementation uses epsilon-stabilized L2 normalization. In practice, we compute the squared off-diagonal elements of the Gram matrix formed by the L2-normalized vectors $\{\mathbf{v}_i / (\|\mathbf{v}_i\|_2+\epsilon) \}_{i=1}^N$. Appendix~\ref{app:diversity} explains why these shared-input LoRA outputs are not independent random directions and reports empirical diversity measurements.

\section{Experiments}
\subsection{Experimental Setup}

\paragraph{Training Data.}
Our model's versatile editing capabilities are a direct result of its comprehensive training mixture. We curate a large-scale dataset by combining publicly available sources with a vast corpus of synthesized data, designed to expose the model to a diverse spectrum of conditional inputs. As illustrated in Figure~\ref{fig:train_data}, this includes reference-based tasks (e.g., face swapping, style transfer, re-lighting), multi-subject compositional generation, single-image editing, and spatially controlled generation from both dense (depth, Canny maps) and pose skeleton signals. This diverse training regimen enables a single, unified model to handle a wide array of editing modalities.

\paragraph{Evaluation Benchmarks.}
We evaluate all models on a suite of benchmarks, each targeting a distinct capability: OmniContext~\citep{omnicontext_benchmark_2025} for in-context generation, XVerseBench~\citep{chen2025xverse} for multiple subjects-driven generation, GEdit-EN-full~\citep{liu2025step1x} for single-image editing,  and MultiGen-20M~\citep{qin2023unicontrol} and COCO Pose 2017~\citep{lin2014microsoft} for spatially controlled generation. 

Tables~\ref{tab:omni}--\ref{tab:gedit} provide system-level context using released models on public benchmarks, while Table~\ref{tab:abla_rp} is the controlled comparison: all LoRA and MoE variants share the same Flux.1 Kontext backbone, data, schedule, and parameter budget.

In the following tables, the best results are in \textbf{bold} and the second best are \underline{underlined}. ($\downarrow$: Lower is better; $\uparrow$: Higher is better).

\subsection{Main Results}
\begin{figure}[t]
  \centering
  \safeincludegraphics[width=0.96\textwidth]{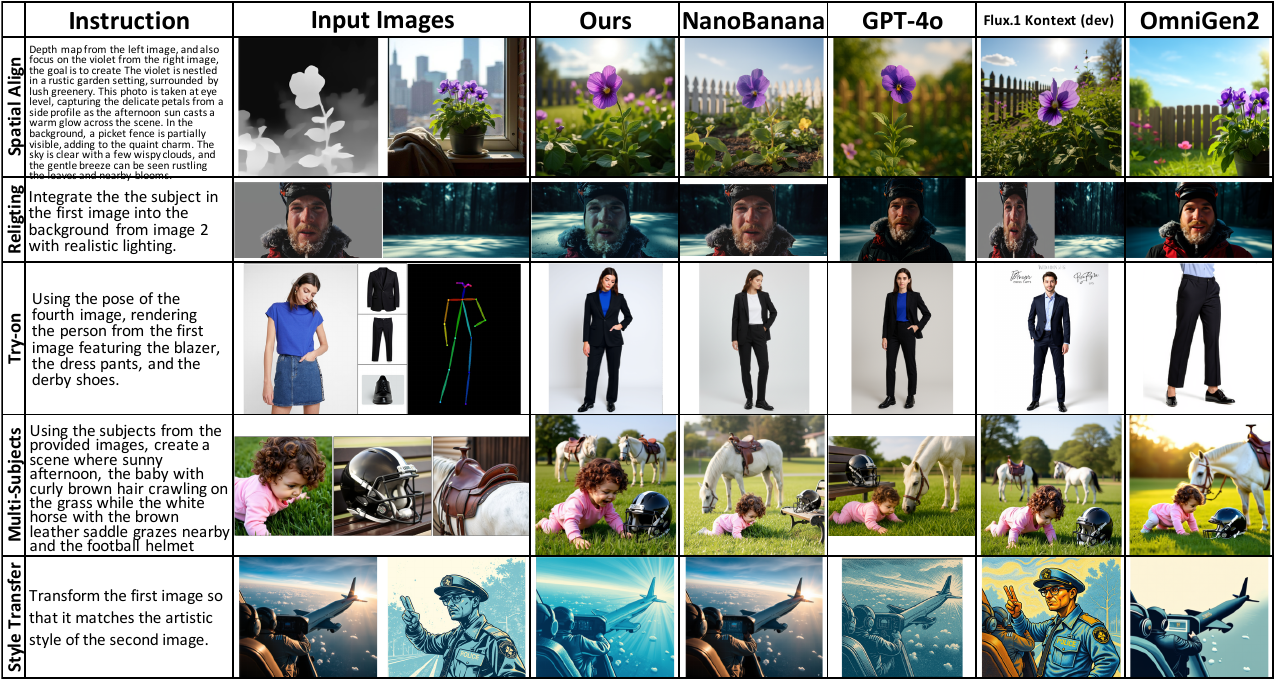}
  \caption{Qualitative comparison with state-of-the-art models.}
  \vspace{-0.8em}
  \label{fig:qual}
\end{figure}
\paragraph{Implementation Details.}
We fine-tune our model from the Flux.1 Kontext (dev) backbone for 100K steps on 64 NVIDIA H100 GPUs. We use $N=8$ experts, $k=4$, rank $r=32$, and apply MoLE to the feed-forward output projections: \texttt{ff.net.2} in double-stream blocks and \texttt{proj\_out} in single-stream blocks. Other trainable LoRA adapters use rank 256 on the image embedder, attention projections, modulation linears, and MLP projections. We set $\lambda_{\text{ortho}}=0.01$, load-balancing weights to 0.1/0.01 for instruction/token routing, and train with AdamW using learning rates $10^{-4}$ for LoRA/MoLE and $5{\times}10^{-5}$ for routers, with global batch size 256.

\paragraph{Efficiency Analysis.}
InstructMoLE achieves computational efficiency through global routing. On H100 GPUs, peak memory usage is 51.25 GB, lower than Expert Choice routing (51.88 GB), as global routing avoids token-wise gather-scatter operations. Inference incurs zero latency overhead compared with the baseline. The orthogonality loss does require all pre-gating expert outputs during training, but this dense expert computation is training-only and is not used at inference.

\paragraph{Qualitative Comparison.} 
Figure~\ref{fig:qual} qualitatively compares our model against state-of-the-art methods on several complex editing tasks, visually confirming its superiority. In tasks requiring precise spatial control like \textit{Spatial Align} and \textit{Try-on}, competing models often fail to respect geometric constraints, yielding misaligned objects or incorrect poses. In contrast, our model robustly adheres to both dense (depth) and sparse (pose) guidance. Our method also demonstrates a stronger grasp of compositional semantics and identity preservation, particularly in the \textit{Multi-Subjects} task where it correctly renders all subjects and their relationships. This advantage is most pronounced in the \textit{Swap Face} task, where InstructMoLE is the only model to produce a coherent result while others fail. This ability to disentangle and execute the spatial, semantic, and identity components of a complex instruction highlights the effectiveness of our approach.

\begin{table}[t]
\centering
\small
\caption{Quantitative comparison on OmniContext, evaluating prompt following and identity similarity (ID-Sim).}
{\renewcommand{\arraystretch}{0.85}
\resizebox{\textwidth}{!}{
\begin{tabular}{l|c|c|ccccc}
\toprule
\multirow{2}{*}{\textbf{Method}} & \multirow{2}{*}{\begin{tabular}[c]{@{}c@{}}\textbf{Prompt}\\ \textbf{Following}\end{tabular}} & \multicolumn{6}{c}{\textbf{ID-Sim}} \\
\cmidrule{3-8}
& & \textbf{Avg.} & \textbf{\begin{tabular}[c]{@{}c@{}}multi\\ character\end{tabular}} & \textbf{\begin{tabular}[c]{@{}c@{}}multi\\ character object\end{tabular}} & \textbf{\begin{tabular}[c]{@{}c@{}}scene\\ character\end{tabular}} & \textbf{\begin{tabular}[c]{@{}c@{}}scene\\ character object\end{tabular}} & \textbf{\begin{tabular}[c]{@{}c@{}}single\\ character\end{tabular}}    \\
\midrule
UNO & 5.63& 16.56& 20.65  & 15.16 & 16.10  & 8.49  & 22.41     \\
DreamO-v1.1 & 6.10& 15.47& 19.30  & 9.46  & 12.41  & 8.42  & 27.77     \\
OmniGen2& \textbf{7.58}  & 23.81  & 27.52  & 18.68 & 18.20  & 12.61 & 42.02    \\
Flux.1 Kontext (dev)& 6.24& \secondbest{36.29}& \textbf{50.08} & \secondbest{31.66}   & \secondbest{30.10}& \secondbest{14.97}   & \secondbest{54.65} \\
InstructMoLE& \secondbest{6.75} & \textbf{38.85} & \secondbest{45.12}& \textbf{34.03}& \textbf{34.59} & \textbf{20.04}& \textbf{60.47}  \\
\bottomrule
\end{tabular}}}
\vspace{-0.8em}
\label{tab:omni}
\end{table}

\paragraph{In-Context Generation.} As shown in Table~\ref{tab:omni}, InstructMoLE achieves the highest average ID-Sim score among the compared released models on the OmniContext benchmark. The benchmark evaluates instruction adherence (assessed by GPT-4.1) and identity preservation (ID-Sim)~\citep{Deng_2019_CVPR}. While OmniGen2 scores highest in \emph{Prompt Following}, it suffers from poor identity preservation, scoring 39\% lower than InstructMoLE (Ours) on the ID-Sim average. InstructMoLE demonstrates a superior balance, outperforming the strong Flux.1 Kontext baseline on both metrics. Although Flux.1 Kontext scores highest on the \emph{multi character} sub-metric, this is due to rendering artifacts where subjects are naively concatenated rather than composed into a coherent scene, as shown in Figure~\ref{fig:qual_omni} and Figure~\ref{fig:qual_omni_more}.

\paragraph{Multiple Subjects-driven Generation.}
We evaluate the model's ability to compose multiple subjects using the subject/reference images and prompts from XVerseBench~\citep{chen2025xverse}, a benchmark comprising 20 human identities, 74 unique objects, and 45 different animal species. We adapt the benchmark to our instruction-driven setting by providing the images and natural-language instruction directly as model inputs, rather than using the original XVerseBench condition-binding protocol. As detailed in Table~\ref{tab:xverse}, we assess performance using three key metrics: instruction adherence (DPG score~\citep{hu2024ella}), human identity preservation (Face ID similarity~\citep{Deng_2019_CVPR}), and object fidelity (DINOv2 similarity~\citep{oquab2024dinov}); the reported average is the arithmetic mean of these three metrics.

\begin{center}
\centering
\begin{minipage}[t]{0.46\textwidth}
\centering
\small
\captionof{table}{Quantitative comparison of multi-subject driven generation.}
\label{tab:xverse}
\resizebox{\linewidth}{!}{%
\begin{tabular}{lcccc}
\toprule
\textbf{Method} & \textbf{DPG} & \textbf{ID-Sim} & \textbf{IP-Sim}  & \textbf{Avg.} \\
\midrule
UNO                  & 74.35          & 31.96           & 53.07           & 53.13                            \\
DreamO-v1.1          & 89.92          & \secondbest{58.86} & 59.93        & \secondbest{69.57}              \\
OmniGen2             & \textbf{91.91} & 35.95           & \secondbest{60.93} & 62.93                            \\
Flux.1 Kontext (dev) & \secondbest{90.51} & 53.52       & 60.08           & 68.04                          \\
InstructMoLE         & 89.57          & \textbf{60.84}  & \textbf{62.81}  & \textbf{71.07}                   \\
\bottomrule
\end{tabular}%
}
\end{minipage}\hfill
\begin{minipage}[t]{0.49\textwidth}
\vspace{0pt}
InstructMoLE achieves the highest average score, outperforming DreamO-v1.1 by 1.50 points and Flux.1 Kontext by 3.03 points. Although OmniGen2 remains strongest on DPG, our method obtains the best ID-Sim and IP-Sim, showing that global instruction-aware routing improves subject fidelity without sacrificing instruction adherence.
\end{minipage}
\end{center}

\paragraph{Single-image Editing.}
As shown in Table~\ref{tab:gedit}, on the GEdit-EN-full benchmark, which evaluates performance across 11 distinct real-world editing categories by GPT-4.1, InstructMoLE again demonstrates its superior versatility by achieving the highest average score among the compared released models. Our model surpasses the strong Flux.1 Kontext baseline and exhibits a decisive advantage over other leading methods like OmniGen2, with an overall performance improvement of more than 13\% over the latter. This strength is particularly evident in its ability to handle both global and local manipulations with high fidelity. It obtains the best scores in fundamental tasks such as \emph{Color Alter}, \emph{Material Alter}, and \emph{Replace}, while remaining highly competitive across nearly all other categories.

\paragraph{Generalization to Unseen Tasks.}
On GenEval, InstructMoLE scores 64.14 versus 63.57 for the pre-trained Flux.1 Kontext baseline, suggesting that fine-tuning preserves the foundation model's original text-to-image capacity in this setting.

\paragraph{Robustness to Complex Instructions.}
On the Complex-Edit benchmark, which features highly complex, chain-of-thought instructions, InstructMoLE demonstrates robust generalization. It achieves superior perceptual quality (7.15 vs 7.08) and identity preservation (8.09 vs 8.01) compared to Flux.1 Kontext, confirming that the global routing signal maintains high fidelity even when processing complex and potentially ambiguous instructions.

\begin{table}[!htbp]
\centering
\caption{Quantitative comparison of single-image editing. The benchmark comprises 11 fine-grained editing categories reflecting practical user requests, with performance for each assessed by GPT-4.1. The corresponding qualitative results are shown in Figure~\ref{fig:qual_gedit}.}
\label{tab:gedit}
\resizebox{\textwidth}{!}{%
\begin{tabular}{l|ccccccccccc|c}
\toprule
\textbf{Model} & \textbf{BG Change} & \textbf{Color Alt.} & \textbf{Mat. Alt.} & \textbf{Motion} & \textbf{Portrait} & \textbf{Style} & \textbf{Add} & \textbf{Remove} & \textbf{Replace} & \textbf{Text} & \textbf{Tone} & \textbf{Avg} \\
\midrule
DreamO-v1.1  & 3.06   & 1.66& 2.35   & \secondbest{3.76}  & 3.24  & 3.34   & 2.16  & 0.55& 3.02 & 1.51  & 2.06  & 2.43  \\
ICEdit   & 2.73   & 6.00& 4.41   & 1.74& 2.14  & 5.19   & 4.41  & 1.53& 4.22 & 1.58  & 4.58  & 3.50 \\
OmniGen2 & \secondbest{6.99} & 5.10& 5.11   & \textbf{3.93}   & \textbf{4.59} & \textbf{6.88}  & 6.17  & 4.68& \secondbest{6.45}   & 4.04  & 6.05  & 5.45  \\
Flux.1 Kontext (dev) & \secondbest{6.99} & \secondbest{7.17}  & \secondbest{5.60} & 3.13& \secondbest{4.29}& 6.70   & \secondbest{6.90}& \textbf{6.92}   & 6.27 & \textbf{5.56} & \secondbest{7.14}& \secondbest{6.06}\\
InstructMoLE & \textbf{7.03}  & \textbf{7.46}   & \textbf{5.81}  & 3.29& 4.20  & \secondbest{6.79} & \textbf{7.03} & \secondbest{6.87}  & \textbf{6.53}& \secondbest{5.43}& \textbf{7.42} & \textbf{6.17} \\ 
\bottomrule
\end{tabular}
}
\end{table}
\begin{table}[t]
  \centering
  \caption{Ablation study of the IGR signal and the orthogonality loss.}
  
  \label{tab:abla_igr_orth}
  \resizebox{\textwidth}{!}{%
  \begin{tabular}{l|c|cc|ccc}
  \toprule
  \textbf{IGR Signal} & \textbf{Orthogonality Loss} & \textbf{Multi-Subject} (\(\uparrow\))  & \textbf{Single-Subject} (\(\uparrow\))  & \textbf{Depth RMSE} (\(\downarrow\)) & \textbf{Canny F1} (\(\uparrow\)) & \textbf{Pose F1} (\(\uparrow\)) \\
  \midrule
  $\mathrm{CLIP}(\mathbf{I_c})$ & w/o   & 64.54  & 6.12  & 35.67 & 38.12\% & 36.74\% \\
  $\mathrm{CLIP}(\mathbf{I_c})$ & w/   & 64.66  & 6.12  & 35.67 & 40.22\% & 36.79\% \\
  $\mathbf{Z}_{\text{global}}$  & w/o  & 65.65  & 5.97  & 33.74 & 38.92\% & 37.77\% \\
  $\mathbf{Z}_{\text{global}}$  & w/  & \textbf{65.68} & \textbf{6.15}   & \textbf{33.34}   & \textbf{41.51\%}  & \textbf{40.97\%}  \\ 
  \bottomrule
  \end{tabular}%
  }
  
  \end{table}

\subsection{Ablation Studies}
\paragraph{Implementation Details.} We train all ablations for 20K steps on 8 NVIDIA H100 GPUs. All MoLE routing variants in Table~\ref{tab:abla_rp} share the same experts, target modules, load-balancing objective, and orthogonality regularizer; only the routing policy changes.

\paragraph{Evaluation Metrics.} We conduct a comprehensive evaluation across three key domains. For \emph{multiple subjects-driven generation} and \emph{single-image editing}, we report average task scores on XVerseBench~\citep{chen2025xverse} and GEdit-EN-full~\citep{gedit_benchmark_2024}; in the ablation tables, \emph{Multi-Subject} and \emph{Single-Subject} denote these average scores rather than individual sub-metrics. For \emph{spatial alignment}, we evaluate on 500 samples for each modality, randomly sampled from the evaluation splits of MultiGen-20M~\citep{qin2023unicontrol} for depth and Canny edge control, and COCO Pose 2017~\citep{lin2014microsoft} for human pose control. The corresponding metrics are Root Mean Squared Error (RMSE) for depth, F1 score for Canny edges, and an F1 score based on Object Keypoint Similarity (OKS) for pose.

\paragraph{Routing Policy Analysis.}
We compare IGR against token-level policies in Table~\ref{tab:abla_rp}, using a high-rank LoRA as a strong baseline under a fair parameter budget. The standard Top-$k$ policy degrades performance, and Expert Race (ER)~\citep{sunec} is unstable on Pose F1. Expert Choice (EC)~\citep{yuanexpert} is stronger on Multi-Subject, suggesting a modest local-detail advantage, but IGR achieves much higher structural consistency. For newly acquired multi-conditional and spatial-control skills, coherent instance-level routing is therefore more important than adaptive token-wise expert selection.
\begin{table}[!htbp]
    \centering
    \caption{Ablation study on routing policies. ``Standard'' refers to the classic token-level Top-$k$ routing policy~\citep{fei2024scalingdiffusiontransformers16}. ``Standard+IGR'' applies the Standard policy to the early dual-stream blocks and IGR to the later single-stream blocks. ``IGR+Standard'' applies the reverse configuration.}
    
    \label{tab:abla_rp}
    \resizebox{\textwidth}{!}{%
    \begin{tabular}{c|c|cc|ccc}
    \toprule
    \textbf{Model} & \textbf{Routing Policy} & \textbf{Multi-Subject} (\(\uparrow\)) & \textbf{Single-Subject} (\(\uparrow\)) & \textbf{Depth RMSE} (\(\downarrow\)) & \textbf{Canny F1} (\(\uparrow\)) & \textbf{Pose F1} (\(\uparrow\)) \\
    \midrule
    Flux.1 Kontext (dev) & -   & 63.87  & 6.06& 63.44& 13.72\%   & 0.95\%   \\
    LoRA ($r=256$)   & -   & 64.68  & 6.06& 35.92& \secondbest{40.43\%} & 31.94\%  \\ \hline
    \multirow{6}{*}{\begin{tabular}[c]{@{}c@{}}MoLE\\ ($r=32,\,N=8,\,k=4$)\end{tabular}} & Standard& 64.56  & 6.07& 38.64& 28.06\%   & 19.50\%  \\
     & Expert Choice (EC)  & \textbf{65.81} & 6.07& 34.00& 38.55\%   & 31.47\%  \\
     & Expert Race (ER)& 55.83  & \secondbest{6.11}  & 102.67   & 16.73\%   & 0.00\%   \\
     & IGR+Standard& 65.17  & \textbf{6.15}   & \secondbest{33.43}  & 38.33\%   & \secondbest{33.77\%}\\
     & Standard+IGR& 64.86  & 6.07& 36.32& 33.33\%   & 9.57\%   \\
     & IGR (ours)  & \secondbest{65.68}& \textbf{6.15}   & \textbf{33.34}   & \textbf{41.51\%}  & \textbf{40.97\%} \\ 
    \bottomrule
    \end{tabular}%
    }
    \end{table}
\paragraph{Analysis of IGR Components.}
We conduct a fine-grained ablation on the core components of our IGR policy in Table~\ref{tab:abla_igr_orth}. The results provide three validations. First, adding the orthogonality loss to the CLIP-only baseline (Canny F1: 38.12\% $\rightarrow$ 40.22\%) demonstrates its contribution independent of the routing signal. Second, replacing the raw CLIP embedding with our fused signal, $\mathbf{Z}_{\text{global}}$, enhances performance across metrics (Multi-Subject: 64.54 $\rightarrow$ 65.65), confirming the necessity of distilling compositional details from token-level features. Third, combining both components yields the best performance (Canny F1: 41.51\%, Pose F1: 40.97\%), confirming that signal distillation and orthogonality regularization are critical and complementary.


\section{Conclusion}
We introduced InstructMoLE to address multi-task interference in parameter-efficient diffusion models by aligning expert routing with the global intent of user instructions. InstructMoLE combines Instruction-Guided Routing (IGR), which selects a globally consistent expert council within each layer, with an output-space orthogonality loss that promotes expert diversity. Experiments on Flux.1 Kontext show that InstructMoLE outperforms LoRA and MoLE variants on challenging multi-conditional benchmarks, establishing global instruction-aware routing as an effective paradigm for faithful compositional control in this setting.

\clearpage

\bibliographystyle{plainnat}
\bibliography{main}

\clearpage

\beginappendix
\raggedbottom

\subsection{The Use of Large Language Models (LLMs)}
We acknowledge the use of a large language model (LLM) to aid in the writing process of this manuscript. Its application was confined to language enhancement tasks, such as proofreading for grammatical and spelling errors, rephrasing sentences to improve clarity and readability, and ensuring a consistent formal tone. The LLM served exclusively as a writing assistant.

\section{Limitations and Societal Impact}
InstructMoLE inherits limitations from the underlying diffusion transformer backbone and from the instruction encoders used to construct the routing signal. Our stress tests indicate failures under severe textual corruption and under high reference load, where several identities or objects must be bound simultaneously. The experiments cover a broad set of multi-conditional generation benchmarks on Flux.1 Kontext, but they do not establish empirical generality across other DiT backbones, domains, languages, or safety-critical deployments. Training also remains computationally expensive despite the parameter-efficient adaptation strategy.

IGR deliberately favors globally coherent skill activation over per-token routing flexibility. This design improves structural consistency, but highly localized instructions with conflicting regional styles may benefit from a constrained hybrid that combines an instance-level base council with limited token-level modulation. Dynamically adapting the number of active experts to instruction complexity is another promising direction that we leave for future work.

InstructMoLE can improve controllable image generation for creative editing, assistive content creation, and data-efficient customization. The same improvements in compositional fidelity and identity preservation could also be misused for deceptive edits, impersonation, or disinformation. Responsible deployment should therefore include consent-aware data practices, provenance or watermarking mechanisms, safety filters for harmful requests, and monitoring for misuse, especially if model weights or generated data are released.

\section{Expert Functional Diversity}
\label{app:diversity}
The orthogonality loss operates on flattened expert outputs, but this does not place the model in a benign high-dimensional random-vector regime. Each expert processes the same input $\mathbf{X}$ through a low-rank LoRA path, so the cosine between two expert outputs depends on the shared input covariance:
\begin{equation}
\cos\left(\mathrm{vec}(\mathbf{X}\mathbf{W}_i), \mathrm{vec}(\mathbf{X}\mathbf{W}_j)\right)
=
\frac{\mathrm{tr}(\mathbf{W}_i^\top \mathbf{X}^\top \mathbf{X}\mathbf{W}_j)}
{\|\mathbf{X}\mathbf{W}_i\|_F \|\mathbf{X}\mathbf{W}_j\|_F}.
\end{equation}
The effective directions are therefore governed by the shared input spectrum and the low-rank bottleneck, not by the full ambient dimensionality. This explains why redundant experts can remain highly aligned unless the model explicitly penalizes functional similarity.

We measure expert diversity on 1,190 real GEdit instructions by comparing pre-gating expert outputs and routing probabilities. Let $\mathbf{U}$ denote the matrix whose rows are the L2-normalized flattened expert output vectors. Adding $\mathcal{L}_{\text{ortho}}$ reduces maximum pairwise redundancy from 0.960 to 0.477 and the Frobenius gap $\|\mathbf{U}^{\top}\mathbf{U}-\mathbf{I}\|_F^2$ from 14.099 to 4.734. Routing balance is preserved: normalized routing entropy increases from 0.938 to 0.960, while the routing coefficient of variation decreases from 0.474 to 0.375. These measurements support the claim that the loss reduces functional redundancy rather than merely adding generic regularization.

\section{Training Data Details}
Our model's versatile capabilities are the direct result of a comprehensive, mixed training dataset, visually summarized in Figure~\ref{fig:train_data}. This dataset is meticulously curated to expose the model to a wide spectrum of conditional inputs, combining large-scale, self-synthesized data for complex compositional tasks with established public datasets for foundational spatial control.

\paragraph{Synthesized Data Corpus.}
To enable sophisticated, instruction-driven editing and composition, we generated a large-scale corpus of training examples. This corpus focuses on tasks that are poorly represented in public datasets but are crucial for real-world applications. It is used only as training data and is not released as a new dataset with this submission; generated samples are filtered automatically for visual quality and instruction alignment. Our synthesis pipeline covers the following key areas:

\begin{itemize}
    \item \textbf{Reference-based Generation:} This category involves tasks that condition the output on multiple input images and a textual instruction.
    \begin{itemize}
        \item \textbf{Face Swapping:} Training triplets consist of a source face image, a target person's image, and an instruction to transfer the facial identity.
        \item \textbf{Multi-Subject Composition:} To teach complex relational reasoning, we generate scenes described by prompts involving multiple, often unrelated, subjects (e.g., ``\textit{An old man, a French Bulldog, a tuba, and a rattlesnake in a sunny park}'').
        \item \textbf{Virtual Try-on:} Samples include a person, multiple clothing items, and potentially a pose skeleton, with instructions to dress the person in the specified apparel.
        \item \textbf{Re-Lighting:} Data consists of a foreground subject and a new background, with the goal of seamlessly integrating the subject into the new lighting environment.
        \item \textbf{Style Transfer:} We provide a content image and a style reference image (e.g., a photograph and a sketch), instructing the model to render the content in the given artistic style.
    \end{itemize}

    \item \textbf{Single Image Editing:} We employ the \textbf{GPT-IMAGE-EDIT-1.5M} dataset~\citep{wang2025gptimageedit15mmillionscalegptgeneratedimage}. This corpus contains over 1.5 million high-quality samples, constructed by leveraging GPT-4o to unify and refine existing public datasets such as OmniEdit, HQ-Edit, and UltraEdit. The refinement process ensures superior visual fidelity and semantic alignment for instruction-based editing tasks.
\end{itemize}

\paragraph{Public Datasets.}
To build a robust foundation in controllable generation, we incorporate several large-scale, publicly available datasets. These datasets provide strong supervision for fundamental spatial and structural alignment tasks.

\begin{itemize}
    \item \textbf{Spatial Alignment Control:} We leverage datasets that provide dense spatial conditioning maps.
    \begin{itemize}
        \item \textbf{Depth Control:} We utilize datasets such as SubjectSpatial-200K, which provide image-depth map pairs, to train the model to generate scenes with accurate spatial layouts.
        \item \textbf{Canny Edge Control:} Similarly, we use datasets with image-Canny edge pairs to enable generation from structural outlines.
        \item \textbf{Pose Control:} We use the widely adopted COCO 2017 dataset~\citep{lin2014microsoft} with its OpenPose keypoint annotations to teach the model to generate human figures conforming to specific pose skeletons.
    \end{itemize}
    
    \item \textbf{Image Inpainting (Fill):} Using datasets like SubjectSpatial-200K, we generate training examples by randomly masking regions of an image. This trains the model to fill in missing parts coherently, a crucial skill for image editing and out-painting tasks.
\end{itemize}

\begin{figure}[t]
  \centering
  \safeincludegraphics[width=0.75\textwidth]{figs/TrainingData.pdf}
  \caption{Training Data Details.}
  \label{fig:train_data}
\end{figure}

\section{Ablation Study}
\begin{table}[!htbp]
\centering
\caption{Ablation study on the MoLE hyper-parameter configuration.}
\label{tab:abla_mole}
\resizebox{\textwidth}{!}{%
\begin{tabular}{ccc|cc|ccc}
\toprule
&\textbf{MoLE} && \textbf{Multi-Subject} ($\uparrow$)  & \textbf{Single-Subject} ($\uparrow$)  & \textbf{Depth RMSE} ($\downarrow$) & \textbf{Canny F1} ($\uparrow$)& \textbf{Pose F1} ($\uparrow$) \\
\midrule
$r$=32  & $N$=8 & $k$=2 & 64.60& \secondbest{6.11}& 35.25  & 35.10\% & 36.47\%\\
$r$=32  & $N$=8 & $k$=4 & \textbf{65.68}   & \textbf{6.15} & \textbf{33.34} & \textbf{41.51\%}& \textbf{40.97\%}   \\
$r$=32  & $N$=8 & $k$=6 & 64.11& 6.10  & \secondbest{33.35}& 37.53\% & \secondbest{37.90\%}  \\
$r$=32  & $N$=8 & $k$=8 & \secondbest{65.22}  & 6.05  & 34.17  & 37.42\% & 34.61\%\\
$r$=64  & $N$=4 & $k$=2 & 65.06& 6.05  & 33.66  & \secondbest{38.27\%}   & 34.77\%\\
$r$=128 & $N$=2 & $k$=1 & 65.10& \textbf{6.15} & 35.50  & 37.52\% & 34.02\%  \\
\bottomrule
\end{tabular}%
}
\end{table}
\paragraph{Analysis of MoLE Hyper-parameters.}
We analyze the MoLE configuration in Table~\ref{tab:abla_mole} to justify our hyper-parameter choices. The results yield two key insights. First, for a fixed expert pool of $N=8$ and rank $r=32$, performance peaks when activating $k=4$ experts. Activating more experts ($k>4$), especially in the non-sparse case ($k=8$), leads to performance degradation, highlighting the importance of sparse activation in mitigating expert interference. Second, for a fixed activation budget ($r \times k \approx 128$), our configuration with a larger pool of diverse, low-rank experts ($N=8, r=32$) is more effective than alternatives with fewer, high-capacity experts (e.g., $N=4, r=64$). This suggests that a greater diversity of specialized experts is more beneficial than individual expert capacity. These findings validate our use of the ($r=32, N=8, k=4$) configuration for all main experiments.

\paragraph{Analysis of Expert Specialization Dynamics.}
\begin{figure}[t]
  \centering
  \safeincludegraphics[width=\textwidth]{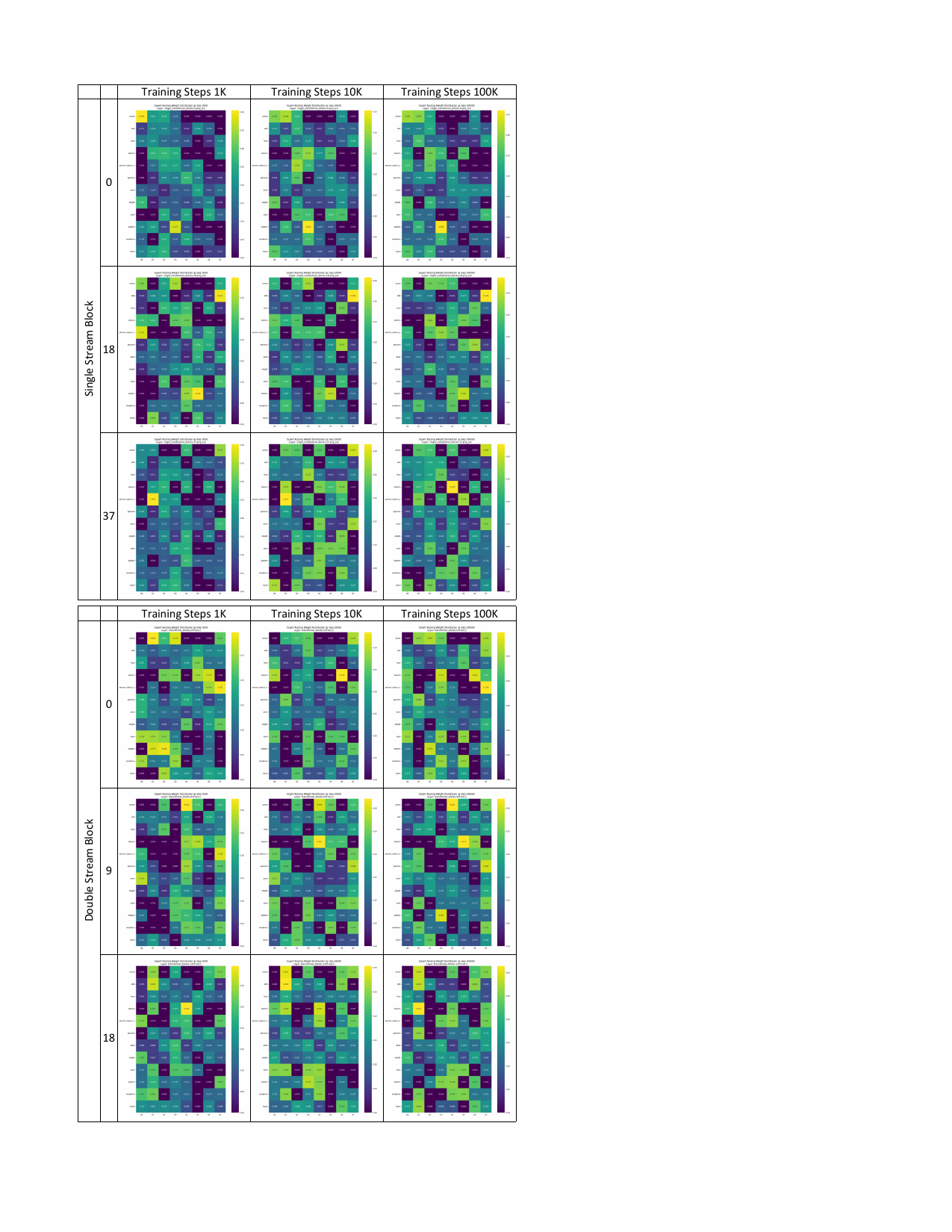}
  \caption{Visualization of the expert routing weight distributions in Double Stream Block. The numbers under the "Double Stream Block" column on the left (0, 9, 18) represent the layer index.}
  \label{fig:double}
\end{figure}

\begin{figure}[t]
  \centering
  \safeincludegraphics[width=\textwidth]{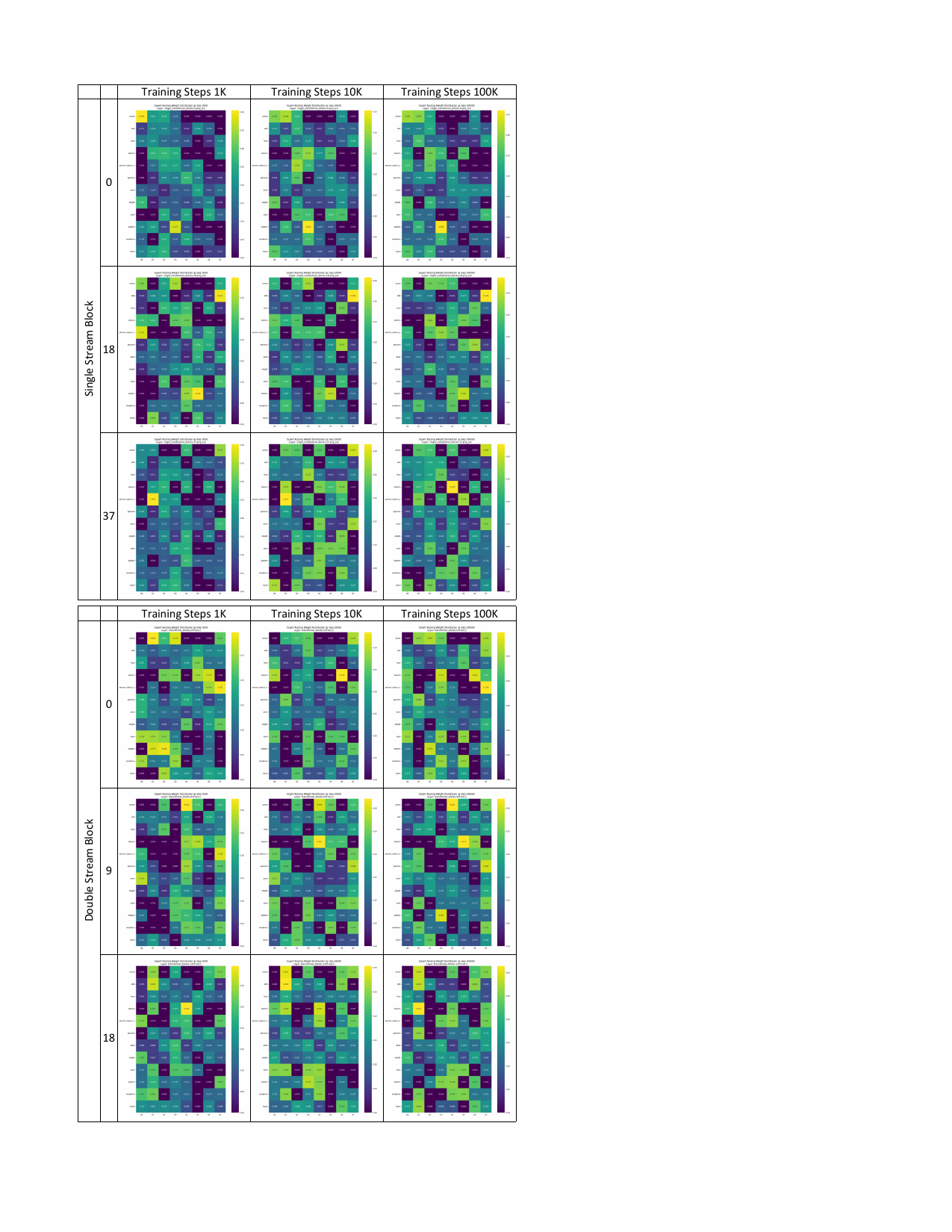}
  \caption{Visualization of the expert routing weight distributions in Single Stream Block. The numbers under the "Single Stream Block" column on the left (0, 18, 37) represent the layer index.}
  \label{fig:single}
\end{figure}
To investigate the learning dynamics of our MoE framework, we visualize the expert routing weight distributions at different training stages (1K, 10K, and 100K steps) and across various model depths, as shown in Figure~\ref{fig:double} and Figure~\ref{fig:single}.
The analysis reveals two key findings.

First, we observe a clear evolutionary trajectory from an initial, diffuse routing policy to a stable and highly specialized one. At 1K steps, the weights are distributed relatively evenly, indicating an early exploratory phase. By 100K steps, the distributions become notably sparse and peaked, with specific experts consistently chosen for particular task categories. This progression from a generalized to a specialized routing policy demonstrates the effective convergence of our training objectives in guiding functional disentanglement.

Second, the nature of expert specialization varies with model depth, suggesting a hierarchical division of labor. In early layers (e.g., Single Stream Block 0), experts tend to specialize in processing fundamental input types, such as distinguishing ``subject" from ``objects". In medial layers (e.g., Single Stream Block 18), we observe the emergence of clear, task-level specialists, with distinct experts strongly favoring tasks like ``canny" or ``swapface". In later layers (e.g., Single Stream Block 37), the routing pattern often becomes more distributed again, suggesting a shift from semantic task execution to a more collaborative final synthesis stage involving multiple experts.


\section{Qualitative Comparison}
\begin{figure}[t]
  \centering
  \safeincludegraphics[width=\textwidth]{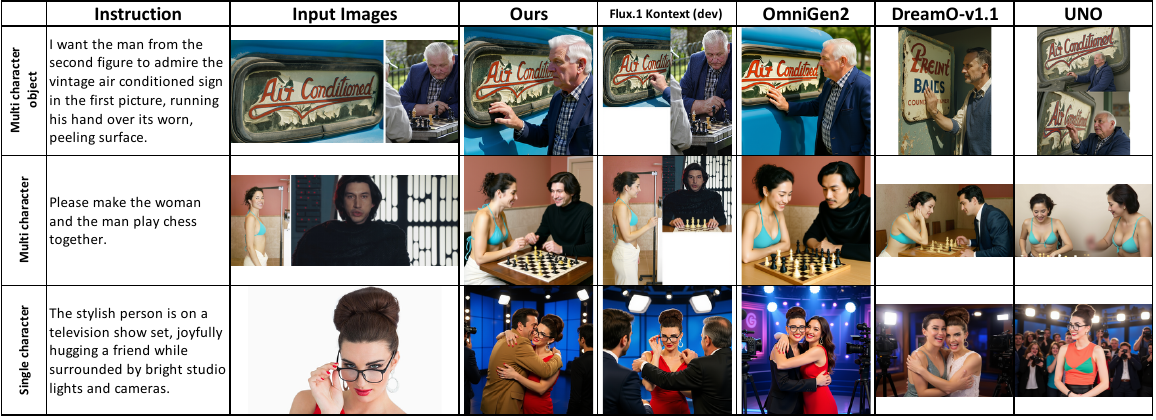}
  \caption{Qualitative comparison of in-context generation on OmniContext benchmark.}
  \label{fig:qual_omni}
\end{figure}

\begin{figure}[t]
  \centering
  \safeincludegraphics[width=\textwidth]{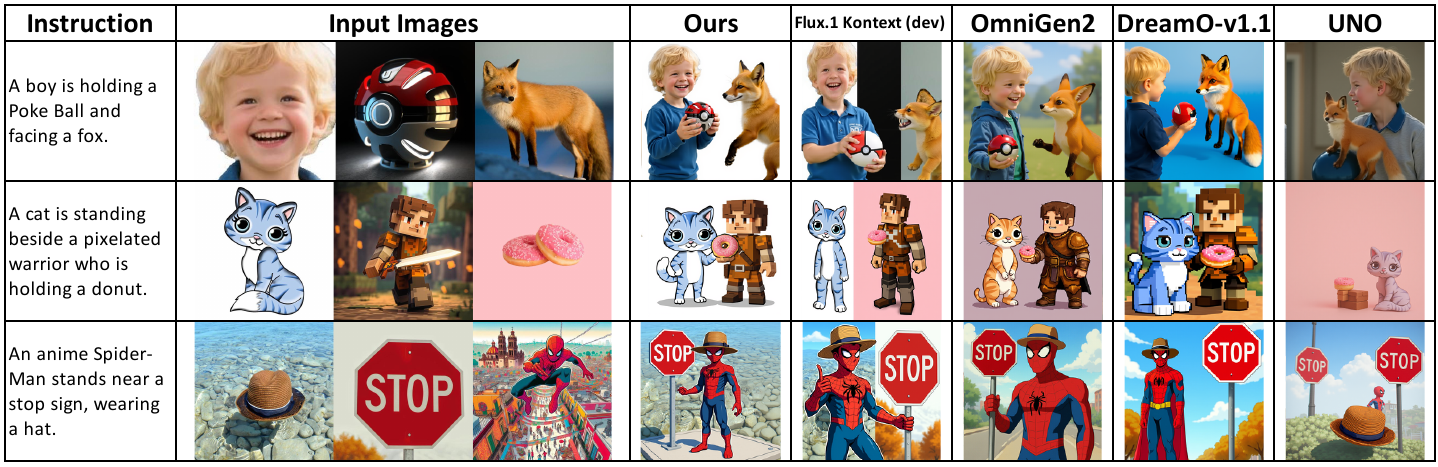}
  \caption{Qualitative comparison of multi-subject driven generation on XVerse benchmark.}
  \label{fig:qual_xverse}
\end{figure}

\begin{figure}[t]
  \centering
  \safeincludegraphics[width=\textwidth]{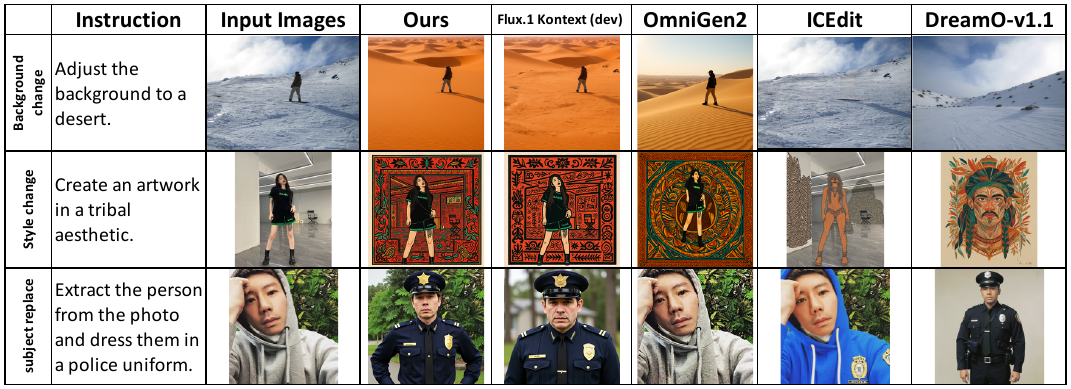}
  \caption{Qualitative comparison of single-image editing on GEdit-EN-full benchmark.}
  \label{fig:qual_gedit}
\end{figure}

\begin{figure}[t]
  \centering
  \safeincludegraphics[width=\textwidth]{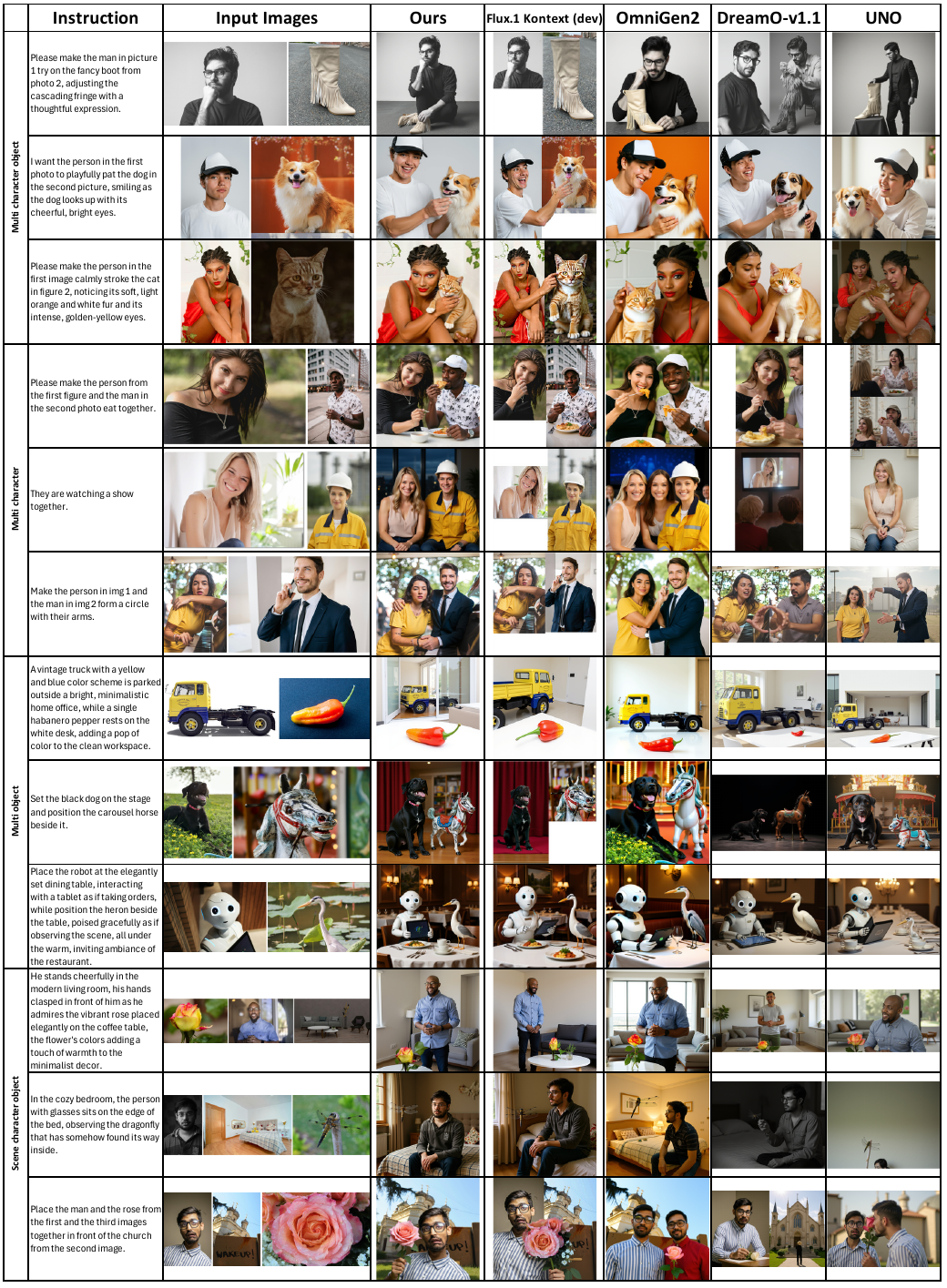}
   \caption{Qualitative comparison on OmniContext benchmark.}
  \label{fig:qual_omni_more}
\end{figure}

\begin{figure}[t]
  \centering
  \safeincludegraphics[width=\textwidth]{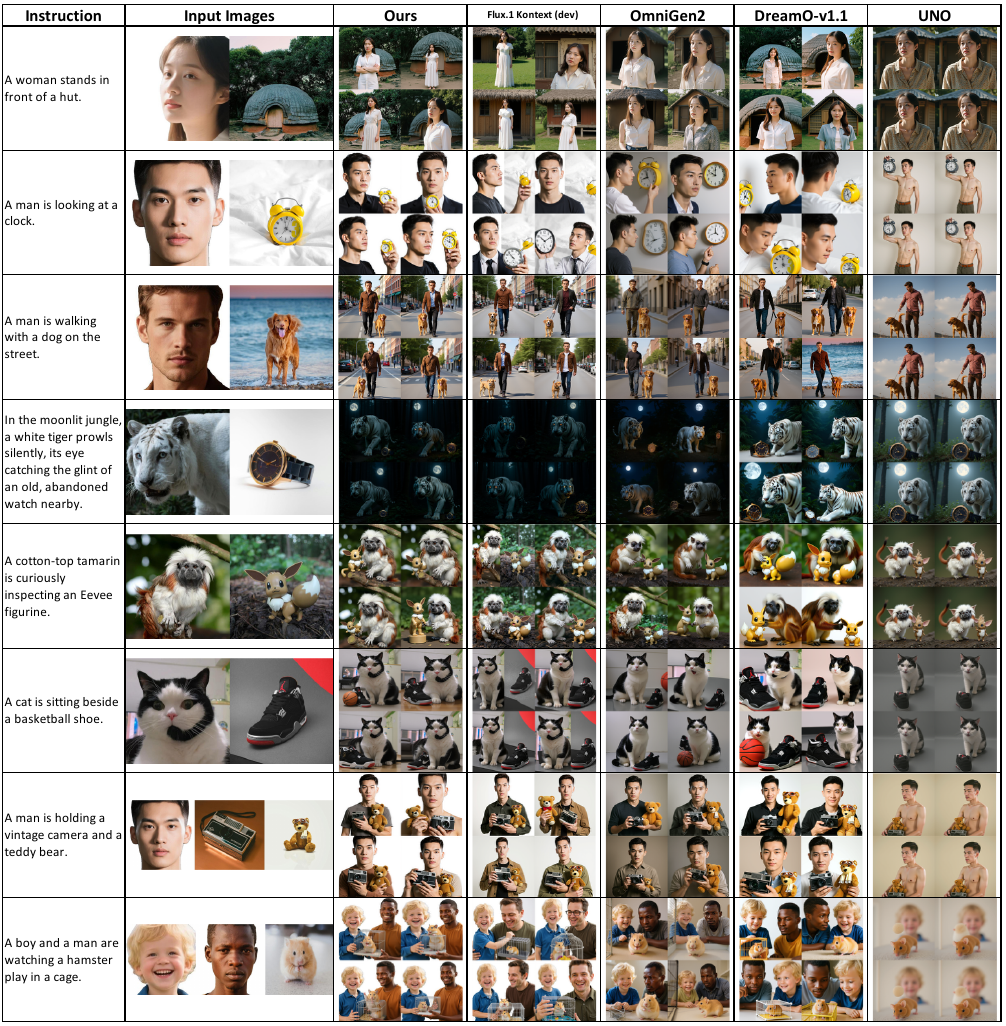}
  \caption{Qualitative comparison on Xverse benchmark.}
  \label{fig:qual_xverse_more}
\end{figure}

\begin{figure}[t]
  \centering
  \safeincludegraphics[width=1\textwidth]{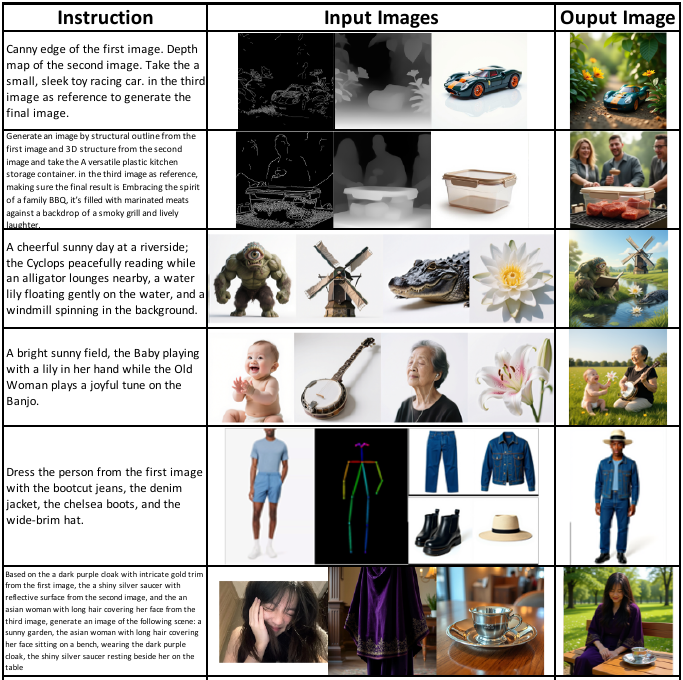}
  \caption{Sample outputs generated by our model (InstructMoLE) from diverse, multi-modal instructions. }
  \label{fig:res-1}
\end{figure}

\section{Qualitative Analysis of Extreme Scenarios}
\label{sec:extreme_analysis}

\begin{figure}[t]
  \centering
  \safeincludegraphics[width=1.0\textwidth]{figs/test_results_comparison.pdf}
  \caption{\textbf{Extreme Scenario Stress Test: Flux.1 Kontext vs. InstructMoLE.} We evaluate robustness across five edge cases spanning input noise, implicit reasoning, and complex composition. While both models share limitations under severe textual corruption (\#1) and high reference load (\#5), InstructMoLE demonstrates superior capabilities in abstract atmospheric inference (\#2) and logical causality (\#4), successfully executing implicit edits where the baseline fails.}
  \label{fig:test_comp}
\end{figure}

To rigorously assess the operational boundaries of our method, we conducted stress tests on extreme scenarios involving textual noise, implicit reasoning, and high-complexity composition, as visualized in Figure~\ref{fig:test_comp}.

\textbf{Limitations and Common Failure Modes.}
Our analysis reveals a discernible threshold for robustness. In Case \#1, characterized by severe typos (e.g., ``Chnage the bckgrnd...''), both InstructMoLE and the baseline fail to resolve the semantic intent. This indicates that while the CLIP semantic anchor provides stability, it cannot fully compensate for T5 token embeddings when textual corruption is excessive. Similarly, in the ultra-complex Case \#5, which requires simultaneous reference binding for three distinct entities (``robot,'' ``boy,'' ``girl''), both models struggle to correctly map identities to the generated subjects. This points to a capacity bottleneck in the underlying cross-attention mechanism of the foundation model when handling high reference loads, a limitation that persists regardless of the routing strategy.

\textbf{Superiority in Reasoning and Coherence.}
Despite these boundary conditions, InstructMoLE exhibits significantly stronger capabilities in \textit{implicit reasoning} and \textit{atmospheric consistency}. In Case \#2 (``Make it look dangerous''), our model successfully modulates the robot's expression and lowers scene lighting to align with the semantic tone, whereas the baseline output remains incongruously bright. Crucially, in Case \#4 (``The person is cold''), InstructMoLE correctly infers the causal implication to generate clothing (a jacket), a logical step the baseline fails to execute. Furthermore, in multi-attribute editing (Case \#3), our method achieves superior spatial and lighting integration of the inserted object.

\end{document}